\documentclass[final,5p,times,twocolumn]{elsarticle}
\usepackage[T1]{fontenc}
\usepackage{amsmath,amssymb,amsthm}
\newtheorem{assumption}{Assumption}
\newtheorem{proposition}{Proposition}

\newtheorem{corollary}{Corollary}
\usepackage{graphicx}
\usepackage{booktabs}
\usepackage{url}
\usepackage{placeins}
\graphicspath{{./}}
\journal{arXiv preprint}
\myfooter[L]{Preprint. Submitted to Journal of the Franklin Institute.} 

\begin{document}
\begin{frontmatter}

\title{Quantile-Based Command-Reserve MPC for Current-Command Gimbal Pointing}

\author[sasee]{Yaokun Lu}
\author[sasee]{Yifan Li}
\author[hzii]{Enmei Wang\corref{cor1}}
\author[sisoe]{Hao Teng\corref{cor1}}
\author[hzii]{Pengwei Hu}
\author[sasee]{Jianzhong Qiao}
\cortext[cor1]{Corresponding authors. E-mail addresses: enmeiwang@126.com (E. Wang), hteng@buaa.edu.cn (H. Teng).}

\affiliation[sasee]{organization={School of Automation Science and Electrical Engineering, Beihang University},
  city={Beijing}, postcode={100191}, country={China}}
\affiliation[hzii]{organization={Hangzhou Innovation Institute of Beihang University},
  city={Hangzhou}, postcode={311115}, country={China}}
\affiliation[sisoe]{organization={School of Instrumentation Science and Optoelectronic Engineering, Beihang University},
  city={Beijing}, postcode={100191}, country={China}}
\begin{abstract}
This paper studies the constrained line-of-sight pointing control problem of a
two-axis current-command gimbal subjected to mismatched mechanical disturbances.
Friction, torque ripple, current-response lag, model mismatch, and limited drive
commands make accurate pointing more challenging yet interesting. A quantile
disturbance predictor is proposed to compensate predictable acceleration
residuals and describe their remaining uncertainty, with the aid of learning
median and outer quantiles from state, command, and reference histories.
A tube-inspired model predictive controller is proposed to reserve command
headroom according to prediction uncertainty, with the aid of calibrated
radii and stage-dependent input margins. The median improves the nominal
prediction, while the radius adapts the command reserve without adding
optimization variables. A finite-horizon tube inclusion result is established
for a specified linear error model and fixed ancillary policy; its conditions
are distinguished from the approximations used in hardware. Native command
bounds and a final command limiter keep the sent commands within their
prescribed range. On a two-axis embedded platform, four controller
configurations are compared in seven repeated trials each. The proposed
configuration achieves a two-dimensional pointing root-mean-square error of
$3.94\pm0.10$\,mrad with no command-limit samples in the evaluation windows.
Its performance supports the combined use of residual prediction and adaptive
command margins in the tested operating condition. Solver times and logged
loop overruns are reported separately for the nominal $100$\,Hz implementation.
\end{abstract}

\begin{keyword}
Current-command gimbal \sep pointing control \sep disturbance prediction \sep
tube-inspired model predictive control \sep constrained control
\end{keyword}

\end{frontmatter}

\section{Introduction}
Current-command gimbals perform line-of-sight pointing in imaging, tracking,
and stabilization systems~\cite{kennedy2003direct,chung2025enhanced,rowienicz2026stabilization}.
Their drives expose a torque-producing current command to the outer pointing
controller. The measured current follows this command through an inner loop,
and the two signals differ during transients. Accurate pointing must therefore
account for the current response while respecting limited command authority.
Friction, torque ripple, and cross-axis effects further degrade the prediction
of the mechanical motion~\cite{armstrong1994survey,cui2024analysis}.

Disturbance rejection has been studied extensively for gimbals. Relay control,
for example, exploits limit-cycle behavior to attenuate exogenous torque
disturbances~\cite{yoon2022relay}. Observer-based approaches estimate composite
uncertainties for compensation~\cite{cui2024analysis,wang2024disturbance}.
For motor servo systems, additive state decomposition separates predictive
tracking from observer-based disturbance rejection~\cite{jiang2023decomposition}.
Neural-network MPC combined with active disturbance rejection has also been
validated on a robot manipulator~\cite{aouaichia2025neural}.
These studies motivate the joint treatment of model correction and constrained
tracking. A point prediction, however, does not by itself quantify the residual
uncertainty used to choose a command reserve.

Tube MPC propagates bounded model errors around a nominal trajectory and
tightens constraints accordingly~\cite{langson2004robust,mayne2005robust,mayne2014model}.
Inverse-dynamics and disturbance-observer-based tube control combines
disturbance compensation with a remaining-error tube for constrained
manipulator tracking~\cite{yang2023observer}. Composite anti-disturbance tube MPC
also uses predicted disturbances in the optimization~\cite{xie2026composite}.
Stochastic MPC explicitly links chance-constraint tightening with separate
feasibility and stability conditions~\cite{lorenzen2017constraint}.
Cautious MPC uses Gaussian-process residual models and propagates their
uncertainty within a chance-constrained formulation~\cite{hewing2020cautious}.

Adaptive or learned tubes are already established directions: adjustable
zonotope tubes have been combined with neural compensation for vehicle
tracking~\cite{wu2024adjustable}; deep quantile regression has been used to
learn trajectory tubes~\cite{fan2020tubes}; and learning-based rigid tube MPC
updates uncertainty descriptions from data~\cite{gao2024learning}.
Other tube constructions optimize polytopic shapes under configuration
constraints~\cite{villanueva2024configuration}, use time-varying cross-sections
for bounded-disturbance systems~\cite{wang2025timevarying}, or incorporate
state-dependent disturbance models into dynamic tubes~\cite{surma2025statedependent}.
These approaches treat tube geometry as part of the robust-control design;
the present hardware implementation uses a calibrated residual radius to
choose command reserves and does not inherit those guarantees.
Learning error bounds can also be connected to robust constraint tightening
in nonlinear economic MPC~\cite{xiong2025learning}. More generally, learning-based
MPC and predictive safety filters distinguish model improvement from the
explicit treatment of constraint protection~\cite{hewing2020learning,wabersich2021predictive}.
Thus, the use of learning or a varying tube is not by itself a new principle.
The issue considered here is how to use a shared residual predictor for both
nominal correction and a changing command reserve at a gimbal's current-command
interface, with an explicit account of what the deployed implementation verifies.

We learn the median and outer quantiles of an acceleration-domain residual
from recent measurements and reference preview. The median corrects a
current-augmented mechanical predictor, while a calibrated radius determines
stage-dependent command margins. The analytical tube construction and its
fixed-policy conditions are stated separately from the practical input-margin
implementation. This distinction matters because a mechanical residual label
does not establish a bound on every error of the augmented current state.

The contributions are summarized as follows.
\begin{enumerate}
\item A shared quantile predictor is developed for acceleration residuals in
two-axis current-command pointing. Our predictor is capable of providing a
median correction and a calibrated error radius from one recurrent model.
These outputs connect nominal prediction accuracy and command reserve to
the same measurement history, while their coverage is assessed empirically.
\item A tube-inspired MPC implementation is introduced to use both outputs
without increasing the optimization dimensions. A conditional finite-horizon
inclusion result clarifies the corresponding tube principle, and the
implemented nonlinear predictor, command margins, and safety monitoring are
specified separately. Four configurations, with seven hardware trials each,
demonstrate the tracking and command-headroom performance of the combined
scheme in the tested condition.
\end{enumerate}

\section{Current-Command Gimbal Model and Problem Formulation}
\label{sec:model}
This section formulates the continuous current-command gimbal model and the
physical signal limits. The sampled prediction model and control architecture
are arranged in Section~\ref{sec:method}.

\subsection{Current-Command Gimbal Pointing Model}
The pointing dynamics of a two-axis current-command gimbal are modeled with
the joint angles
$\boldsymbol\theta=[\theta_{\mathrm{az}},\theta_{\mathrm{el}}]^{\mathrm T}$,
joint rates $\boldsymbol\omega=\dot{\boldsymbol\theta}$, and drive current
$\boldsymbol i=[i_{\mathrm{az}},i_{\mathrm{el}}]^{\mathrm T}$. The subscripts
$\mathrm{az}$ and $\mathrm{el}$ denote the azimuth and elevation axes,
respectively. The control command $\boldsymbol u$ is sent to the
torque-producing channel of the drive, whereas $\boldsymbol i$ is returned by
the inner current loop. A first-order current-response model
approximates the synchronous-frame current loop~\cite{kim2010discrete}, so
$\boldsymbol u$ and $\boldsymbol i$ are modeled as distinct signals.
Computation-induced delay can also degrade predictive current control
if omitted from the design~\cite{cortes2012delay}.

The combined current-command pointing model is
\begin{equation}
\left\{
\begin{aligned}
\dot{\boldsymbol\theta} &= \boldsymbol\omega,\\
\boldsymbol M(\boldsymbol\theta)\dot{\boldsymbol\omega}
\!&= -\boldsymbol C(\boldsymbol\theta,\boldsymbol\omega)\boldsymbol\omega
   -\boldsymbol D\boldsymbol\omega
   -\boldsymbol G(\boldsymbol\theta)
   +\boldsymbol K_{\mathrm t}\boldsymbol i+\boldsymbol d_{\mathrm m},\\
\boldsymbol T_{\mathrm c}\dot{\boldsymbol i}
\!&=-\boldsymbol i+\boldsymbol u+\boldsymbol d_{\mathrm c},
\end{aligned}
\right.
\label{eq:continuous_current_command}
\end{equation}
where all signals are in $\mathbb R^2$,
$\boldsymbol M(\boldsymbol\theta)$ is the inertia matrix,
$\boldsymbol C(\boldsymbol\theta,\boldsymbol\omega)$ captures Coriolis and
cross-axis coupling terms, $\boldsymbol D$ is the viscous damping matrix,
$\boldsymbol G(\boldsymbol\theta)$ is the gravity and imbalance torque vector,
and
$\boldsymbol K_{\mathrm t}=\mathrm{diag}(K_{\mathrm t,\mathrm{az}},
K_{\mathrm t,\mathrm{el}})$ is the torque-constant matrix. The time-constant
matrix of the current response is
$\boldsymbol T_{\mathrm c}=\mathrm{diag}(\tau_{\mathrm c,\mathrm{az}},
\tau_{\mathrm c,\mathrm{el}})$. The load-side torque disturbance
$\boldsymbol d_{\mathrm m}$ represents friction, torque ripple, load and base
disturbances, parameter mismatch, and unmodeled cross-axis effects. The
current-channel residual $\boldsymbol d_{\mathrm c}$ represents the part not
captured by the first-order current-response model, including command latency,
PWM dead time, and voltage saturation.

Retaining the current layer in \eqref{eq:continuous_current_command} brings
three modeling difficulties. First, the mechanical torque follows the command
through the current response. Second, the mechanical disturbance enters the
rate dynamics, whereas the command enters the current dynamics; these channels
are mismatched. Third, the current-channel residual may affect both future
current and mechanical motion. These effects need not admit a common
two-dimensional disturbance representation in the full augmented state.

The physical signal limits are component-wise angle, rate, measured-current,
and command bounds. Let $\boldsymbol x=[\boldsymbol\theta^{\mathrm T},
\boldsymbol\omega^{\mathrm T},\boldsymbol i^{\mathrm T}]^{\mathrm T}$, and write
the associated state and input sets as
\begin{equation}
\begin{aligned}
\mathcal X=\{\boldsymbol x:\;&|\boldsymbol\theta|\leq\boldsymbol\theta_{\max},
\ |\boldsymbol\omega|\leq\boldsymbol\omega_{\max},
\ |\boldsymbol i|\leq\boldsymbol i_{\max}\},\\
\mathcal U=\{\boldsymbol u:\;&|\boldsymbol u|\leq\boldsymbol u_{\max}\}.
\end{aligned}
\label{eq:physical_sets}
\end{equation}
These sets define the ideal constrained-control problem. In the hardware
implementation, commands are bounded in the optimizer and before transmission;
angle and rate limits are checked by a runtime monitor. Predicted state-set
tightening is not implemented as a hard constraint. This practical distinction
is made explicit in Section~\ref{sec:deployment}.

The hardware acceleration baseline is an identified nonlinear function
$\boldsymbol f_{\mathrm{id}}(\boldsymbol\theta,\boldsymbol\omega,\boldsymbol i)$,
using currents, rates, and trigonometric angle terms. It approximates the
mechanical part of \eqref{eq:continuous_current_command}, rather than replacing
all unmodeled effects with an exact disturbance channel. A local linear
surrogate is used below only to state the conditional tube result; the
implemented predictor retains its nonlinear angle dependence.

\section{Quantile-Based Command-Reserve MPC}
\label{sec:method}
The control architecture in Fig.~\ref{fig:architecture} connects a residual
predictor to nominal model correction and uncertainty-dependent command
reserves. We first define the learned quantity, then state the analytical
tube result, and finally describe the deployed approximation.

\begin{figure*}[!t]
\centering
\includegraphics[width=0.98\textwidth]{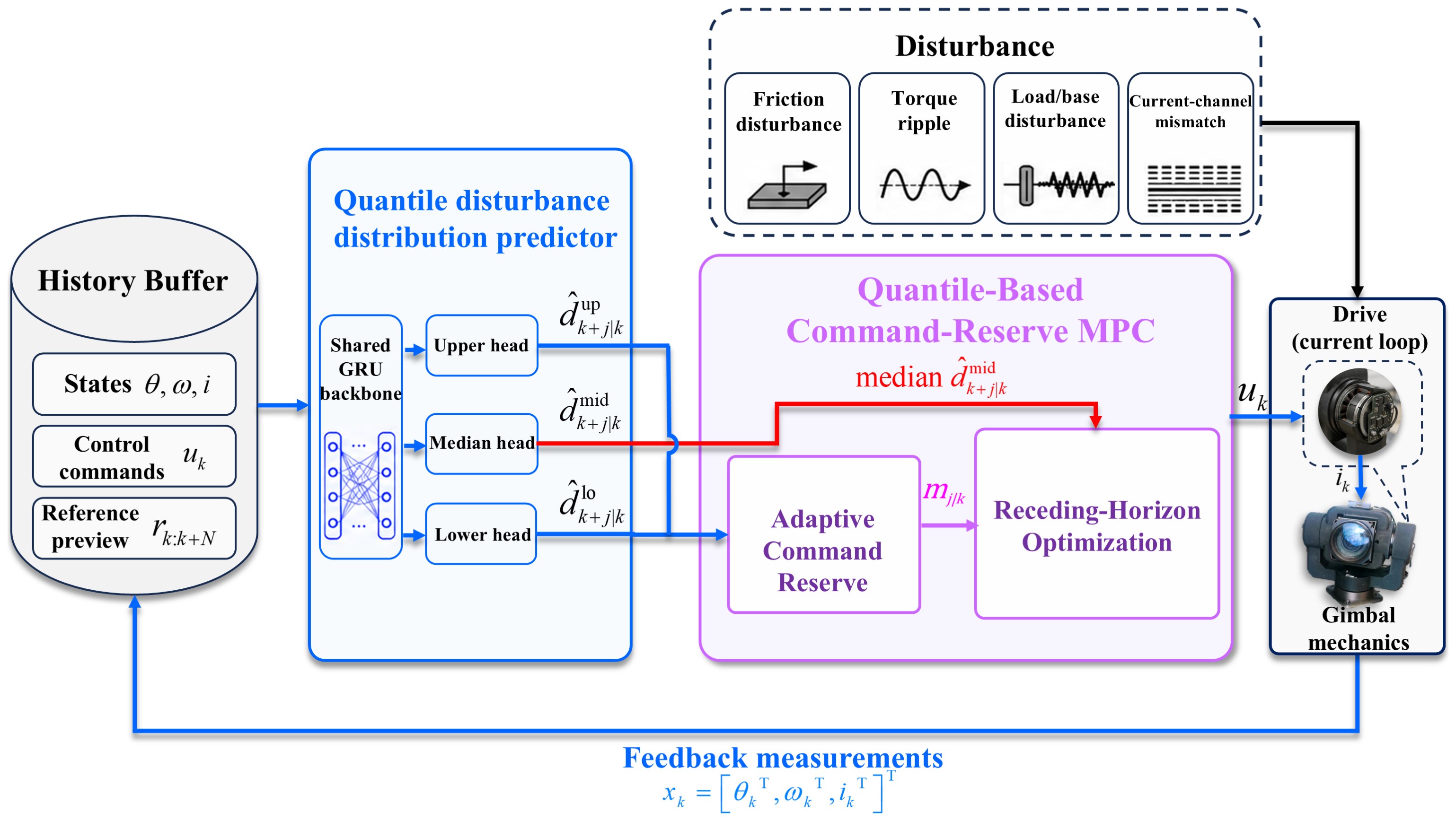}
\caption{Conceptual architecture for current-command gimbal pointing. The
predictor provides a median and outer quantiles. The adaptive command reserve
sets the input margin $m_{j|k}$. Analytical tube propagation is distinguished
from the hardware command-margin implementation
in Section~\ref{sec:deployment}. The disturbance sources shown are physical
sources affecting the plant; separate identification of each source is not
claimed.}
\label{fig:architecture}
\end{figure*}

\subsection{Quantile Residual Predictor and Calibration}
\label{sec:predictor}
For a logged sample $k$, the posterior acceleration-residual label is
\begin{equation}
\boldsymbol d_k^+=\mathcal D[\boldsymbol\omega]_k
-\boldsymbol f_{\mathrm{id}}(\boldsymbol\theta_k,
\boldsymbol\omega_k,\boldsymbol i_k),
\label{eq:disturbance_label}
\end{equation}
where $\mathcal D$ denotes numerical differentiation with the data-processing
filter, and $+$ marks a reconstructed label, not a forward-time index.
With an exact synchronized derivative and exact mechanical baseline, this
quantity reduces to $\boldsymbol M^{-1}\boldsymbol d_{\mathrm m}$.
With sampled and filtered measurements, it may also contain mechanical
effects of current variation within the sampling interval, discretization
error, identification error, and measurement noise. It is therefore an
effective mechanical prediction residual, not an independently measured
torque disturbance or a separately identified current-channel residual.

In particular, the presence of current-channel effects in a sampled
acceleration label does not prove that it bounds the independent future
current-state error. Such a claim would require checking the full sampled
state residual and its component outside the chosen disturbance channel.
We do not use label-prediction accuracy as evidence of full augmented-state
coverage.

A gated recurrent unit (GRU)~\cite{cho2014learning} processes a history of
$H$ feature vectors. These vectors contain state and past-command information,
tracking errors, and reference preview. Symbolically,
\begin{equation*}
\boldsymbol\xi_k=\{\boldsymbol x_{k-H+1:k},
\boldsymbol u_{k-H:k-1},\boldsymbol r_{k:k+N}\},
\end{equation*}
where $\boldsymbol r$ contains angle and rate references. The notation $k+j|k$
denotes a forecast for step $k+j$ made at $k$. A shared recurrent backbone and
output layer produce three quantiles for each axis and horizon step:
\begin{equation}
\{\hat{\boldsymbol d}^{\mathrm{lo}}_{k+j|k},
\hat{\boldsymbol d}^{\mathrm{mid}}_{k+j|k},
\hat{\boldsymbol d}^{\mathrm{up}}_{k+j|k}\}_{j=0}^{N-1}
=\mathcal P_{\Theta}(\boldsymbol\xi_k).
\label{eq:predictor}
\end{equation}
The target quantile levels are $\mathcal T=\{\alpha/2,0.5,1-\alpha/2\}$.
For scalar label component $a$ and training window $\ell$, define
$\varepsilon_{\ell,j,a}^{\tau}=d^+_{\ell+j,a}
-\hat d^\tau_{\ell+j|\ell,a}$. The quantile objective is
\begin{equation}
\mathcal L_Q(\Theta)=\sum_{\ell,j,a,\tau\in\mathcal T}
\max\{\tau\varepsilon_{\ell,j,a}^{\tau},
(\tau-1)\varepsilon_{\ell,j,a}^{\tau}\}.
\label{eq:pinball}
\end{equation}
This pinball loss targets conditional quantiles at the population optimum
\cite{koenker1978regression}; finite-data training does not ensure exact
conditional coverage.

Calibration uses a separate set of windows $\mathcal C$. For every axis $a$
and step $j$, let $c_{a,j}\geq0$ be the empirical $(1-\alpha)$-quantile of
\begin{equation*}
\max\{\hat d^{\mathrm{lo}}_{\ell+j|\ell,a}-d^+_{\ell+j,a},
d^+_{\ell+j,a}-\hat d^{\mathrm{up}}_{\ell+j|\ell,a},0\},
\quad \ell\in\mathcal C.
\end{equation*}
The implemented symmetric radius is
\begin{equation}
\begin{aligned}
\rho_{a,j|k}=\max\bigl\{0,\;&\max\{
\hat d^{\mathrm{mid}}_{k+j|k,a}-\hat d^{\mathrm{lo}}_{k+j|k,a},\\[-2pt]
&\hat d^{\mathrm{up}}_{k+j|k,a}-\hat d^{\mathrm{mid}}_{k+j|k,a}\}
+c_{a,j}\bigr\}.
\end{aligned}
\label{eq:error_set}
\end{equation}
For ordered quantiles this symmetric band contains the expanded outer-quantile
interval. The outer maximum prevents a negative radius if heads cross.
Standard split-conformal coverage guarantees rely on
exchangeability~\cite{angelopoulos2023conformal}. Adaptive conformal inference
addresses distribution shift through online calibration
adjustment~\cite{gibbs2021adaptive}. Here, the offsets remain fixed after
calibration. Although related to interval calibration
methods~\cite{romano2019conformalized,xu2023sequential}, this empirical
procedure is not a proof of joint-horizon coverage for dependent closed-loop
data. Nominal $1-\alpha$ refers to the target for a scalar axis-step
pair, not to the joint event over two axes and $N$ steps.

\subsection{Conditional Finite-Horizon Tube Construction}
\label{sec:theory}
To describe the tube principle, consider the following linear surrogate in
deviation coordinates:
\begin{equation}
\boldsymbol x_{k+1}=\boldsymbol A\boldsymbol x_k
+\boldsymbol B\boldsymbol u_k+\boldsymbol E\boldsymbol d_k.
\label{eq:discrete_model}
\end{equation}
Here $\boldsymbol d_k\in\mathbb R^2$ represents a mechanical-acceleration
disturbance and $\boldsymbol E$ its sampled-state channel. Equation
\eqref{eq:discrete_model} is an assumption on the error model used for the
following result. An independent current-state residual, sensor error, or
linearization remainder outside this channel would require an additional
bounded term. The learned labels in \eqref{eq:disturbance_label} alone do not
establish that this assumption holds for the physical six-state plant.

Fix a prediction time $k$, a median sequence, and a nominal plan
$(\boldsymbol z_{j|k},\boldsymbol\nu_{j|k})$ satisfying
\begin{equation}
\boldsymbol z_{j+1|k}=\boldsymbol A\boldsymbol z_{j|k}
+\boldsymbol B\boldsymbol\nu_{j|k}
+\boldsymbol E\hat{\boldsymbol d}^{\mathrm{mid}}_{k+j|k},
\quad \boldsymbol z_{0|k}=\boldsymbol x_k.
\label{eq:nominal_model}
\end{equation}
Define $\boldsymbol x^{(k)}_j$ as the trajectory obtained by retaining this
plan and applying its ancillary policy
\begin{equation*}
\boldsymbol u^{(k)}_j=\boldsymbol\nu_{j|k}
+\boldsymbol K(\boldsymbol x^{(k)}_j-\boldsymbol z_{j|k}),
\qquad \boldsymbol x^{(k)}_0=\boldsymbol x_k.
\end{equation*}
With $\boldsymbol A_K=\boldsymbol A+\boldsymbol B\boldsymbol K$,
$\boldsymbol e^{(k)}_j=\boldsymbol x^{(k)}_j-\boldsymbol z_{j|k}$, and
$\tilde{\boldsymbol d}_{j|k}=\boldsymbol d_{k+j}
-\hat{\boldsymbol d}^{\mathrm{mid}}_{k+j|k}$, this fixed-policy error obeys
\begin{equation}
\boldsymbol e^{(k)}_{j+1}=\boldsymbol A_K\boldsymbol e^{(k)}_j
+\boldsymbol E\tilde{\boldsymbol d}_{j|k}.
\label{eq:error_dynamics}
\end{equation}

\begin{assumption}[Horizon disturbance-envelope event]
\label{assum:env}
For the fixed prediction at $k$, the realized disturbance errors satisfy
$|\tilde{\boldsymbol d}_{j|k}|\leq\boldsymbol\rho_{j|k}$ component-wise
for $j=0,\ldots,N-1$.
\end{assumption}

In this result, the disturbance sequence is evaluated along the fixed-policy trajectory. This assumption describes a joint event. Its probability is not established
by training the individual quantiles. Propagate a component-wise box radius by
\begin{equation}
\boldsymbol s_{0|k}=\boldsymbol0,\qquad
\boldsymbol s_{j+1|k}=|\boldsymbol A_K|\boldsymbol s_{j|k}
+|\boldsymbol E|\boldsymbol\rho_{j|k}.
\label{eq:tube}
\end{equation}
The corresponding nominal constraints are
\begin{equation}
\begin{aligned}
\boldsymbol z_{j|k}&\in\mathcal X\ominus
[-\boldsymbol s_{j|k},\boldsymbol s_{j|k}],&&j=0,\ldots,N,\\
\boldsymbol\nu_{j|k}&\in\mathcal U\ominus
[-|\boldsymbol K|\boldsymbol s_{j|k},|\boldsymbol K|\boldsymbol s_{j|k}],
&&j=0,\ldots,N-1.
\end{aligned}
\label{eq:tighten}
\end{equation}
Here $\ominus$ denotes the Pontryagin difference and matrix absolute values
are element-wise.

\begin{proposition}[Fixed-policy finite-horizon inclusion]
\label{thm:tube}
Suppose \eqref{eq:discrete_model} is the complete error model for the trajectory
under consideration. Under Assumption~\ref{assum:env}, the fixed ancillary
policy above satisfies $|\boldsymbol e^{(k)}_j|\leq\boldsymbol s_{j|k}$
for $j=0,\ldots,N$. If its nominal plan satisfies \eqref{eq:tighten}, then
$\boldsymbol x^{(k)}_j\in\mathcal X$ and
$\boldsymbol u^{(k)}_j\in\mathcal U$ on their respective horizons.
\end{proposition}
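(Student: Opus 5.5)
The argument has two parts: an induction on $j$ for the componentwise error bound, followed by a set-inclusion step that converts this bound into state and input admissibility.

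\emph{Error bound.} For $j=0$, both $\boldsymbol x^{(k)}_0$ and $\boldsymbol z_{0|k}$ equal $\boldsymbol x_k$, so $\boldsymbol e^{(k)}_0=\boldsymbol0=\boldsymbol s_{0|k}$. Assume $|\boldsymbol e^{(k)}_j|\leq\boldsymbol s_{j|k}$ componentwise for some $j\leq N-1$. Because \eqref{eq:discrete_model} is taken as the complete error model, subtracting \eqref{eq:nominal_model} from the closed-loop plant under the fixed ancillary policy gives exactly \eqref{eq:error_dynamics}, with no remainder term. Apply the elementwise triangle inequality $|\boldsymbol M\boldsymbol v|\leq|\boldsymbol M||\boldsymbol v|$ to both terms. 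Since $|\boldsymbol A_K|$ and $|\boldsymbol E|$ are nonnegative matrices, componentwise inequalities are preserved under multiplication by them. Combining this with the induction hypothesis and Assumption~\ref{assum:env} gives
\begin{equation*}
|\boldsymbol e^{(k)}_{j+1}|\leq|\boldsymbol A_K||\boldsymbol e^{(k)}_j|+|\boldsymbol E||\tilde{\boldsymbol d}_{j|k}|\leq|\boldsymbol A_K|\boldsymbol s_{j|k}+|\boldsymbol E|\boldsymbol\rho_{j|k}=\boldsymbol s_{j+1|k},
\end{equation*}
which is \eqref{eq:tube}. This closes the induction for $j=0,\ldots,N$.

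\emph{Constraint satisfaction.} For the state, write $\boldsymbol x^{(k)}_j=\boldsymbol z_{j|k}+\boldsymbol e^{(k)}_j$. The error bound places $\boldsymbol e^{(k)}_j$ in the box $[-\boldsymbol s_{j|k},\boldsymbol s_{j|k}]$. By definition of the Pontryagin difference, $(\mathcal X\ominus\mathcal W)\oplus\mathcal W\subseteq\mathcal X$. Hence \eqref{eq:tighten} gives $\boldsymbol x^{(k)}_j\in\mathcal X$ for $j=0,\ldots,N$.

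For the input, $\boldsymbol u^{(k)}_j-\boldsymbol\nu_{j|k}=\boldsymbol K\boldsymbol e^{(k)}_j$. By the same elementwise inequality, $|\boldsymbol K\boldsymbol e^{(k)}_j|\leq|\boldsymbol K|\boldsymbol s_{j|k}$, so this term lies in the tightening box. The Pontryagin property then gives $\boldsymbol u^{(k)}_j\in\mathcal U$ for $j=0,\ldots,N-1$.

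\emph{Main obstacle.} The algebra is routine; the delicate point is keeping the hypotheses honest. The disturbance bound of Assumption~\ref{assum:env} must hold along the fixed-policy trajectory $\boldsymbol x^{(k)}_j$, not along the trajectory actually realized when the plan is re-optimized. Likewise, the absence of any additional residual outside the $\boldsymbol E$ channel is exactly what makes \eqref{eq:error_dynamics} an equality. I would state both points explicitly in the proof and note that no probabilistic claim follows from it.
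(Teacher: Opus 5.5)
Your proposal is correct and follows the paper's own argument: induction from the zero initial error through \eqref{eq:error_dynamics} and Assumption~\ref{assum:env}, then the Pontryagin-difference definition for the state and input memberships. You also make explicit the elementwise bound $|\boldsymbol M\boldsymbol v|\leq|\boldsymbol M||\boldsymbol v|$ and the input-side step $|\boldsymbol K\boldsymbol e^{(k)}_j|\leq|\boldsymbol K|\boldsymbol s_{j|k}$, which the paper's terse proof leaves implicit.
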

\begin{proof}
The initial error is zero. If $|\boldsymbol e^{(k)}_j|\leq\boldsymbol s_{j|k}$,
then \eqref{eq:error_dynamics} and the envelope event yield
\begin{equation*}
|\boldsymbol e^{(k)}_{j+1}|\leq|\boldsymbol A_K|\boldsymbol s_{j|k}
+|\boldsymbol E|\boldsymbol\rho_{j|k}=\boldsymbol s_{j+1|k}.
\end{equation*}
Induction proves inclusion. The definition of the tightened sets then gives
the state and input memberships.
\end{proof}

No contraction condition is needed for this finite-horizon induction. An LQR
gain may be used to shape the tube, but Schur stability of $\boldsymbol A_K$
does not imply $\|\boldsymbol A_K\|_\infty<1$ or stability of
$|\boldsymbol A_K|$. An invariant or infinite-horizon tube requires additional
conditions beyond the result above.

\begin{corollary}[Conditional first-step constraint satisfaction]
\label{cor:firststep}
At a given solve time $k$, suppose the model in \eqref{eq:discrete_model} is
complete, the state is measured exactly, and a feasible nominal plan satisfies
\eqref{eq:tighten}. If its first command is applied without modification and
$|\boldsymbol d_k-\hat{\boldsymbol d}^{\mathrm{mid}}_{k|k}|
\leq\boldsymbol\rho_{0|k}$, then $\boldsymbol u_k\in\mathcal U$ and
$\boldsymbol x_{k+1}\in\mathcal X$.
\end{corollary}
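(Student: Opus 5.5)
The corollary is the case $j=0$ and $j=1$ of Proposition~\ref{thm:tube}; the plan is to check that the applied command and the realized next state coincide with the fixed-policy quantities $\boldsymbol u^{(k)}_0$ and $\boldsymbol x^{(k)}_1$. The only disturbance value used on this one-step window is $\tilde{\boldsymbol d}_{0|k}$. So the corollary's hypothesis $|\boldsymbol d_k-\hat{\boldsymbol d}^{\mathrm{mid}}_{k|k}|\leq\boldsymbol\rho_{0|k}$ supplies exactly the part of Assumption~\ref{assum:env} we need. The full horizon event is not required.

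\textbf{Step 1: the applied command.} Exact state measurement gives $\boldsymbol z_{0|k}=\boldsymbol x_k=\boldsymbol x^{(k)}_0$, so $\boldsymbol e^{(k)}_0=\boldsymbol 0$. The ancillary policy then yields $\boldsymbol u^{(k)}_0=\boldsymbol\nu_{0|k}$. Since the first command is applied without modification, $\boldsymbol u_k=\boldsymbol\nu_{0|k}$. By \eqref{eq:tube}, $\boldsymbol s_{0|k}=\boldsymbol 0$, so the input tightening in \eqref{eq:tighten} at $j=0$ reduces to $\boldsymbol\nu_{0|k}\in\mathcal U$. Hence $\boldsymbol u_k\in\mathcal U$.

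\textbf{Step 2: the next state.} Because \eqref{eq:discrete_model} is assumed complete, the true next state is $\boldsymbol x_{k+1}=\boldsymbol A\boldsymbol x_k+\boldsymbol B\boldsymbol u_k+\boldsymbol E\boldsymbol d_k=\boldsymbol x^{(k)}_1$. Subtracting \eqref{eq:nominal_model} at $j=0$ gives
\begin{equation*}
\boldsymbol e^{(k)}_1=\boldsymbol E\tilde{\boldsymbol d}_{0|k}.
\end{equation*}
Therefore $|\boldsymbol e^{(k)}_1|\leq|\boldsymbol E|\,|\tilde{\boldsymbol d}_{0|k}|\leq|\boldsymbol E|\boldsymbol\rho_{0|k}=\boldsymbol s_{1|k}$, using the hypothesis and the element-wise triangle inequality. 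This is just the first induction step in the proof of Proposition~\ref{thm:tube}.

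\textbf{Step 3: membership in $\mathcal X$.} The state tightening in \eqref{eq:tighten} at $j=1$ gives $\boldsymbol z_{1|k}\in\mathcal X\ominus[-\boldsymbol s_{1|k},\boldsymbol s_{1|k}]$. By the definition of the Pontryagin difference, $\boldsymbol z_{1|k}+\boldsymbol e\in\mathcal X$ for every $|\boldsymbol e|\leq\boldsymbol s_{1|k}$. Taking $\boldsymbol e=\boldsymbol e^{(k)}_1$ gives $\boldsymbol x_{k+1}\in\mathcal X$.

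The main obstacle is not computational. It is Step 2's identification of the realized $\boldsymbol x_{k+1}$ with the fixed-policy trajectory. That identification rests on three hypotheses: completeness of the model, exact measurement, and unmodified application of $\boldsymbol\nu_{0|k}$. Each should be cited explicitly, so that it is clear the hardware limiter and the nonlinear predictor fall outside the result. Note also that the bound needs only the single-step disturbance condition, not Assumption~\ref{assum:env} over all $N$ steps.
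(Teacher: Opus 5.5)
Your proof is correct and follows the paper's own argument. Zero initial error and $\boldsymbol s_{0|k}=\boldsymbol 0$ give $\boldsymbol u_k=\boldsymbol\nu_{0|k}\in\mathcal U$; the one-step error $\boldsymbol E(\boldsymbol d_k-\hat{\boldsymbol d}^{\mathrm{mid}}_{k|k})$ is bounded by $\boldsymbol s_{1|k}$; and the $j=1$ state tightening gives $\boldsymbol x_{k+1}\in\mathcal X$ — you are slightly more explicit than the paper in attributing $\boldsymbol u_k=\boldsymbol\nu_{0|k}$ to exact measurement plus unmodified application rather than to $\boldsymbol s_{0|k}=\boldsymbol 0$ alone.
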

\begin{proof}
Since $\boldsymbol s_{0|k}=0$, the applied command equals
$\boldsymbol\nu_{0|k}$. The next-state error is
$\boldsymbol E(\boldsymbol d_k-\hat{\boldsymbol d}^{\mathrm{mid}}_{k|k})$,
whose component-wise magnitude is bounded by $\boldsymbol s_{1|k}$.
The tightened sets give the result.
\end{proof}

Reoptimization changes the future nominal plan, so Proposition~\ref{thm:tube}
does not enclose the entire subsequent reoptimized trajectory around the old
plan. Reinitialization makes the new nominal initial error zero; it neither
changes the physical state nor proves feasibility of the next optimization.
Neither recursive feasibility nor closed-loop stability is claimed here.

\subsection{Nonlinear Prediction and Command-Margin Implementation}
\label{sec:deployment}
The hardware optimizer retains the current state and the nonlinear identified
acceleration. Using physical current commands $\boldsymbol\nu$ in amperes,
the discrete predictor is
\begin{equation}
\left\{\begin{aligned}
\boldsymbol i_{j+1|k}&=\boldsymbol i_{j|k}
+T_{\mathrm s}\boldsymbol T_{\mathrm c}^{-1}
(\boldsymbol\nu_{j|k}-\boldsymbol i_{j|k}),\\
\boldsymbol\omega_{j+1|k}&=\boldsymbol\omega_{j|k}
+T_{\mathrm s}\bigl(\boldsymbol f_{\mathrm{id}}(\boldsymbol\theta_{j|k},
\boldsymbol\omega_{j|k},\boldsymbol i_{j|k})+\boldsymbol\mu_{j|k}\bigr),\\
\boldsymbol\theta_{j+1|k}&=\boldsymbol\theta_{j|k}
+T_{\mathrm s}\boldsymbol\omega_{j+1|k}.
\end{aligned}\right.
\label{eq:nonlinear_model}
\end{equation}
The median correction $\boldsymbol\mu$ includes the fixed deployment scaling,
offset, and temporal filtering described below. The current response is a
nominal model, with no separate learned $\boldsymbol d_{\mathrm c}$ channel.
The software orders states axis-by-axis as
$(\theta_{\mathrm{az}},\omega_{\mathrm{az}},i_{\mathrm{az}},
\theta_{\mathrm{el}},\omega_{\mathrm{el}},i_{\mathrm{el}})$; this is a
permutation of the grouped notation used in Section~\ref{sec:model}.

The optimized objective is
\begin{align}
J_k=&\sum_{j=0}^{N-1}\bigl(
\|\boldsymbol C_y\boldsymbol z_{j|k}-\boldsymbol r_{k+j}\|_{\boldsymbol Q}^2
+\|\boldsymbol\nu_{j|k}-\boldsymbol b_k\|_{\boldsymbol R}^2\nonumber\\
&\quad+\|\boldsymbol\nu_{j|k}-\boldsymbol u^{\mathrm{sent}}_{k-1}
\|_{\boldsymbol S}^2\bigr)
+\|\boldsymbol C_y\boldsymbol z_{N|k}-\boldsymbol r_{k+N}
\|_{\boldsymbol P}^2,
\label{eq:mpc_obj}
\end{align}
subject to \eqref{eq:nonlinear_model}, the measured initial state, and the
native input bounds specified below. $\boldsymbol C_y$ selects angle and
rate, $\boldsymbol b_k$ is the live command bias, and all weights are
nonnegative. The effort penalty is centered on the bias. At every horizon
stage the command-change penalty uses the same previously sent command
$\boldsymbol u^{\mathrm{sent}}_{k-1}$, rather than the previous optimized
stage command. No measured-current stage penalty is included.

For axis $a$, the practical input margin in command-count coordinates is
\begin{equation}
m_{a,j|k}=g_{\rho}|\bar\rho_{a,j|k}|+m^K_{a,j|k},\qquad
L_{a,j|k}=\max\{L_{a,\min},L_{a,\max}-m_{a,j|k}\},
\label{eq:command_margin}
\end{equation}
and $|\nu^{\mathrm{count}}_{a,j|k}|\leq L_{a,j|k}$.
One command count corresponds to $0.01$\,A. The radius $\bar\rho$ is the
scaled calibrated radius. The implementation provides an optional
$m^K=\gamma\sum_l|K_l|\bar s_l$ term. Its per-axis auxiliary widths accumulate
acceleration uncertainty according to
\begin{equation*}
\bar s_{\omega,j+1}=\min\{s_{\omega,\max},
\bar s_{\omega,j}+T_{\mathrm s}|\bar\rho_j|\},
\end{equation*}
\begin{equation*}
\bar s_{\theta,j+1}=\min\{s_{\theta,\max},
\bar s_{\theta,j}+T_{\mathrm s}\bar s_{\omega,j+1}\}.
\end{equation*}
These capped accumulators, and a minimum allowed input range, are engineering
choices. They are not asserted to be the full box propagation
\eqref{eq:tube} or its guaranteed tightening \eqref{eq:tighten}. In particular,
the input margin is not proved to dominate $|\boldsymbol K|\boldsymbol s$.
Consequently, Proposition~\ref{thm:tube} and Corollary~\ref{cor:firststep} are
conditional analytical results, not a certified state-constraint guarantee for
this hardware controller.

The dynamics in \eqref{eq:nonlinear_model} depend nonlinearly on angle. The
original optimization is therefore a nonlinear optimal-control problem. The
acados implementation~\cite{verschueren2022acados} uses sequential quadratic
programming with real-time iteration (SQP-RTI)~\cite{diehl2005realtime},
a Gauss--Newton Hessian approximation, and a partially condensed HPIPM
QP solver~\cite{frison2020hpipm}. Updating the predicted median and radius does not
add states, controls, or constraints, so the QP subproblem dimensions remain
fixed. Its coefficients and solution time can still vary.

After optimization, the first command is blended with the operating bias,
passed through the configured slew limit, rounded, and limited to the outer
command bounds before CAN transmission. Thus the actual sent input need not
equal the raw optimizer output. Angle and rate monitoring can trigger a stop;
it does not replace predictive state-set tightening. The hardware results
below evaluate this deployed pipeline as a whole.

\begin{figure*}[!t]
\centering
\includegraphics[width=0.78\textwidth]{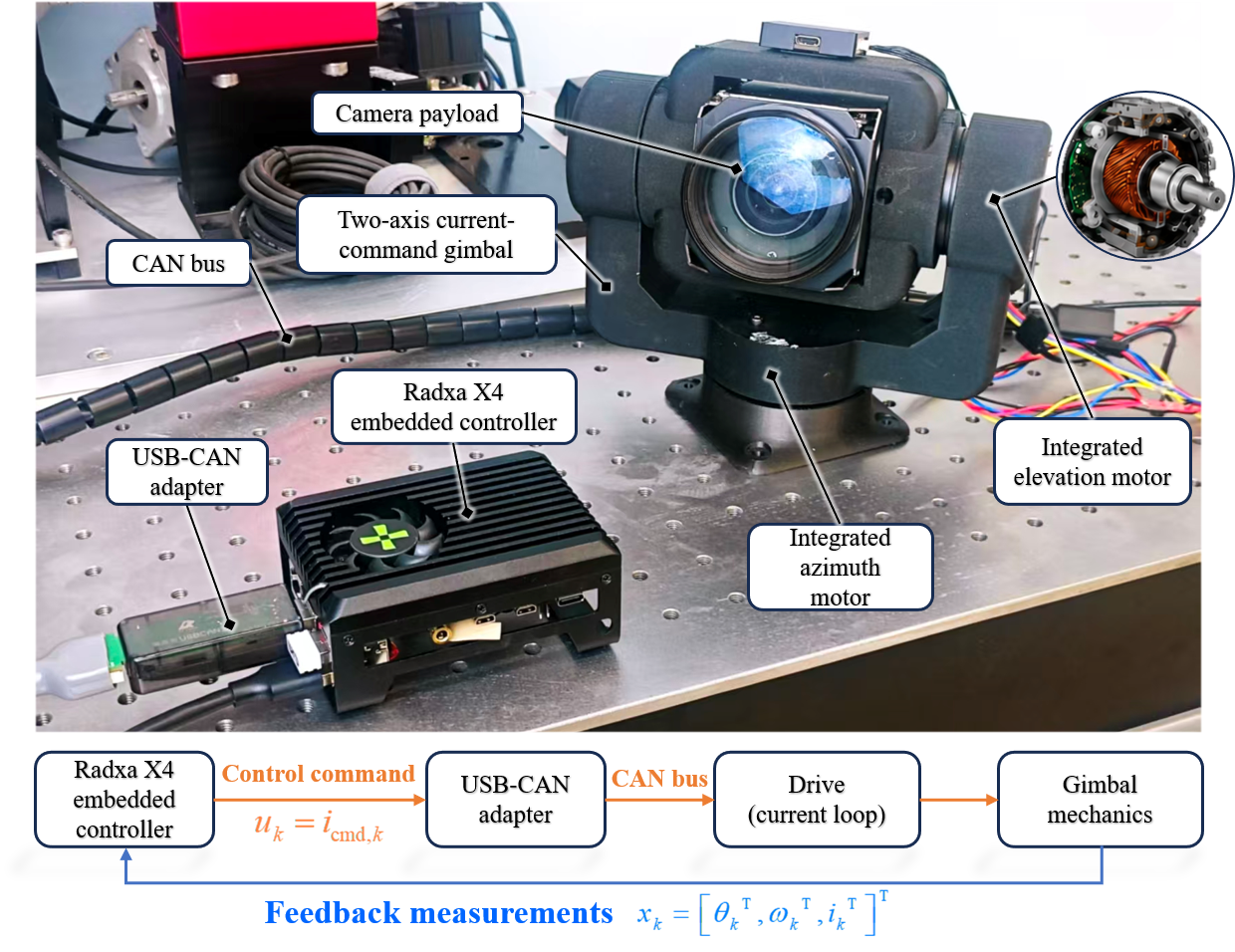}
\caption{Embedded two-axis current-command gimbal platform. A Radxa X4
controller exchanges commands and measurements over CAN. The inset shows the
integrated elevation drive.}
\label{fig:hardware_platform}
\end{figure*}

\section{Experimental Validation}
\label{sec:experiments}
The experiments evaluate the deployed controller's residual prediction,
pointing accuracy, command-boundary behavior, and timing. They do not test a
full augmented-state tube certificate.

\begin{table*}[!t]
\caption{Hardware comparison: mean $\pm$ sample standard deviation over seven
trials per configuration; statistics use $t\geq5$\,s in each $18$\,s run.}
\label{tab:hw}
\centering\small
\begin{tabular}{@{}lcccc@{}}
\toprule
Method & Azimuth RMSE (mrad) & Elevation RMSE (mrad) & 2D RMSE (mrad) & Command-limit samples\\
\midrule
MPC & $4.60\pm0.17$ & $5.88\pm1.18$ & $7.49\pm1.03$ & $3.9\pm2.7$\\
MPC-P & $2.92\pm0.16$ & $3.54\pm0.39$ & $4.59\pm0.35$ & $1.3\pm1.3$\\
Fixed Tube-MPC & $4.55\pm0.06$ & $4.06\pm0.63$ & $6.11\pm0.43$ & $0.0\pm0.0$\\
Proposed & $2.86\pm0.06$ & $2.70\pm0.20$ & $3.94\pm0.10$ & $0.0\pm0.0$\\
\bottomrule
\end{tabular}
\end{table*}

\subsection{Platform, Predictor, and Comparison Protocol}
The controller runs in C++/SocketCAN on the platform in
Fig.~\ref{fig:hardware_platform}, with a nominal sampling period
$T_{\mathrm s}=10$\,ms. Encoder angles, gyroscope rates, drive-current
feedback, command values, GRU outputs, and solver timing are logged. The
comparison comprises four configurations:
\begin{enumerate}
\item MPC: neither learned median correction nor radius-based input margin;
\item MPC-P: learned median correction without the radius-based margin;
\item Fixed Tube-MPC: fixed-radius input margin without median correction;
\item Proposed: median correction and prediction-dependent input margin.
\end{enumerate}
The name Fixed Tube-MPC denotes the fixed-margin hardware comparator; it
does not imply a certified invariant-tube implementation. Shared plant,
reference, and non-component controller settings are retained in the
comparison. Each configuration is run seven times for approximately $18$\,s.
The first $5$\,s are excluded from tracking and command-boundary statistics,
leaving an evaluation window of approximately $13$\,s per trial. Trials are
summarized individually before their means and sample standard deviations
are computed; time samples are not treated as independent repeated trials.

Over the evaluation window, the reference spans approximately $26.5^\circ$
in azimuth and $17.3^\circ$ in elevation peak-to-peak. The outer command limits
are $\pm0.70$\,A and $\pm1.15$\,A, corresponding to $70$ and $115$ counts.
The runtime angle-monitor thresholds are $\pm18^\circ$ and $\pm14^\circ$.
These outer bounds are distinct from the stage-dependent reserves in
\eqref{eq:command_margin}.

The GRU uses $H=48$ history samples, $N=15$ forecasts, $96$ hidden units, and
$51$ features. The shared output network consists of layer normalization,
a $96$-unit SiLU layer, and $90$ scalar outputs ($15\times2\times3$).
The feature set includes measured angles, rates, currents, previous commands,
tracking errors, IMU rates, trigonometric angles, and reference preview at
offsets $1$, $3$, $5$, and $10$ samples. The archived model-development manifest assigns complete recordings to
$13$ training, $4$ calibration, and $4$ test files before history windows
are constructed. The resulting sets contain $20\,243$, $6\,503$, and
$6\,491$ windows, respectively; overlapping windows within a recording
are not independent observations. The documented training procedure fits
the nominal acceleration model and feature normalization on the training
recordings. The calibration recordings are used both to select the network
checkpoint and to estimate the interval offsets. They are therefore a
validation/calibration set, rather than an untouched split-conformal
calibration set. The four offline test recordings are separate from these
training and calibration files. Quantile levels are $0.05$, $0.5$, and $0.95$.
The model-development report is dated June 5, 2026. The $28$ comparison
recordings collected on June 6, including trial 3 in
Fig.~\ref{fig:prediction}, are not among its listed development files;
the model-report hash recorded with these runs matches the archived report.
These later closed-loop trials are distinct from the offline test windows.
The identified acceleration basis comprises a constant, both measured currents,
both rates, and the sine and cosine of each joint angle, with separate
coefficients for each output axis.

The deployment filters the median sequence with a first-order temporal filter
and permits fixed residual offsets; these operations alter the center from
the raw quantile output. The archived trial metadata record unit residual
gains, a $-40$\,deg/s$^2$ elevation residual offset, command biases of $0$ and
$64$ counts, and first-command blend gains of $1$ and $0.95$ for azimuth and
elevation. These are fixed engineering deployment settings, rather than
learned quantiles. The archived metadata specify the same biases, blend
gains, and residual settings across all seven comparison batches. Biases
and blend gains are shared across the four configurations; the residual
offset is applied only when median correction is enabled (MPC-P and
Proposed). The comparison does not establish that these settings are
universally optimal or that controller performance is independent of tuning.
The radius is formed before median filtering. Coverage of the raw
network interval must therefore not be assumed for the shifted, filtered
control correction. The radius-dependent input reserves are evaluated through
the actual logged closed-loop behavior. The optional LQR-margin contribution
is zero in all $28$ trial logs; the tested reserve therefore comes from the
direct radius-to-command term in \eqref{eq:command_margin}.

\begin{figure}[!tb]
\centering
\includegraphics[width=0.98\columnwidth]{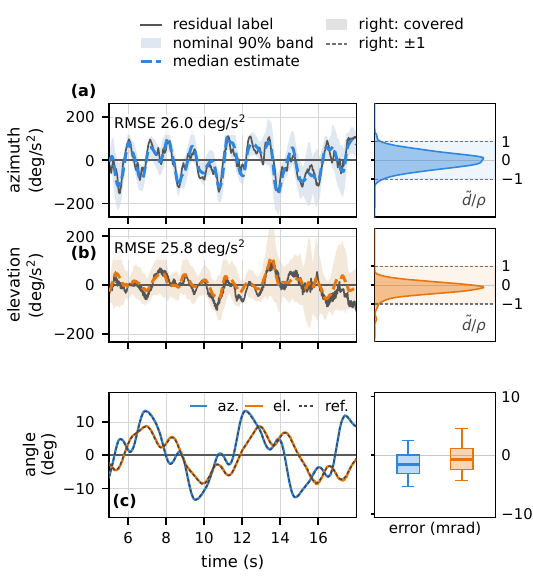}
\caption{Representative hardware trial. (a,b) Processed acceleration-residual
labels, logged median predictions, and calibrated symmetric prediction bands
for azimuth and elevation. The nominal scalar target is $90\%$; the reported
coverage is empirical. The right panels show residual prediction error
normalized by the radius. (c) Angle tracking and pointing-error distributions.}
\label{fig:prediction}
\end{figure}

\subsection{Prediction Accuracy and Coverage Interpretation}
Figure~\ref{fig:prediction} shows reconstructed acceleration residuals,
logged median predictions, and tracking in a representative Proposed trial.
This run is separate from the model-development files described above;
its processed residual scores are not the model report's offline test scores.
The plotted signals use a nine-sample moving-average display pipeline;
the rates are smoothed before numerical differentiation and the resulting
acceleration is smoothed again. A further centered nine-sample average is
applied to the reconstructed residual, logged median, and radius before
plotting and scoring. This is an offline evaluation, not a causal online
measurement of disturbance truth. In this processed evaluation, the residual
root-mean-square error (RMSE) decreases from $64.1$ to $26.0$\,deg/s$^2$ in azimuth and from $38.4$ to
$25.8$\,deg/s$^2$ in elevation. The baseline is the zero residual correction.
These figures describe agreement with reconstructed labels, rather than
directly measured physical disturbances.

To specify coverage, let $\mathcal I_r$ be the valid evaluation samples in
trial $r$. For an available label sequence, define the axis-step frequency
\begin{equation}
\widehat C_{a,j}^{(r)}=\frac{1}{|\mathcal I_r|}
\sum_{k\in\mathcal I_r}\mathbf1\{
|d^+_{k+j,a}-\hat d^{\mathrm{mid}}_{k+j|k,a}|\leq\rho_{a,j|k}\}.
\label{eq:empirical_coverage}
\end{equation}
Joint-axis or whole-horizon coverage would instead count the intersection
of the relevant events at each $k$; it cannot be inferred from their separate
frequencies. The $98.5\%$ and $97.9\%$ coverage values in
Fig.~\ref{fig:prediction} apply to $j=0$ for the azimuth and elevation
processed signals of that representative run (trial 3, $1\,299$ evaluation samples).
They do not measure joint
two-axis, $15$-step, or full-state coverage. No joint-horizon probability
guarantee is claimed. Panel~(c) shows tracking performance; predictability
is assessed by the residual comparisons in panels~(a,b).

\begin{figure}[!tb]
\centering
\includegraphics[width=0.98\columnwidth]{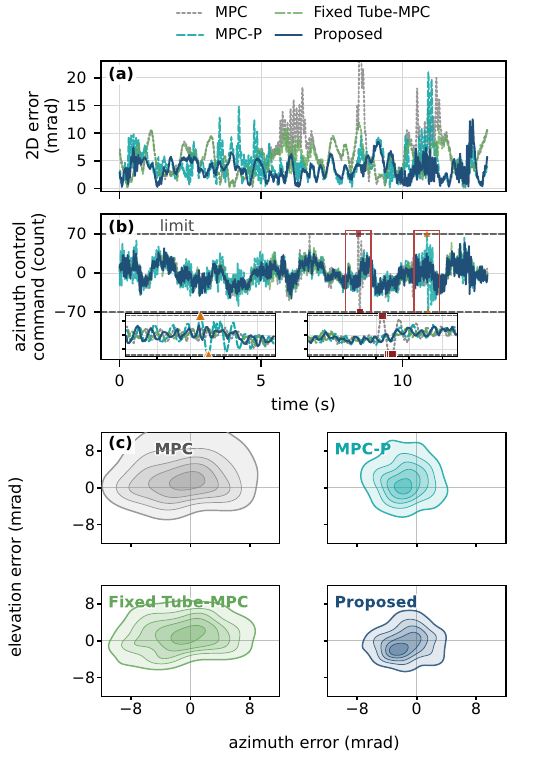}
\caption{Representative four-configuration hardware comparison under the
outer command limits. (a) Two-dimensional pointing error. (b) Azimuth command;
markers denote samples at the command boundary. (c) Pointing-error
distributions. This comparison includes conditional component comparisons,
but does not isolate adaptive versus fixed radii at a shared median correction.}
\label{fig:tracking}
\end{figure}

\subsection{Tracking and Command-Boundary Results}
For trial $r$, two-dimensional pointing RMSE is computed as
\begin{equation}
R_r=\sqrt{\frac{1}{|\mathcal I_r|}\sum_{k\in\mathcal I_r}
(e^2_{\mathrm{az},r,k}+e^2_{\mathrm{el},r,k})},
\label{eq:rmse}
\end{equation}
with angular errors converted to milliradians. Table~\ref{tab:hw} reports
$\overline R\pm s_R$, where $s_R$ is the sample standard deviation of the seven
trial RMSE values (denominator $6$). Axis-wise RMSE is calculated similarly.

The last column counts command-boundary samples per axis and sums the two
axis counts. A sample is counted when its saturation flag is set or the
absolute rounded logged command reaches the prescribed outer command limit.
This is a boundary-usage indicator, not a count of strict constraint
violations. A command equal to its limit still satisfies the non-strict
command constraint.

The Proposed configuration has the lowest observed mean 2D RMSE,
$3.94\pm0.10$\,mrad, with zero command-limit samples in the evaluation
windows. Its unrounded mean RMSE is $47.4\%$ below MPC, $14.2\%$ below MPC-P,
and $35.6\%$ below Fixed Tube-MPC. MPC-P versus MPC evaluates median correction
without a tube margin; Proposed versus MPC-P evaluates adding the implemented
adaptive margin when median correction is present. Proposed versus Fixed
Tube-MPC changes both median correction and the radius rule, so that
comparison cannot isolate the benefit of an adaptive radius relative to a
fixed radius. No claim of a separate causal advantage for radius adaptation
is made from this comparison. The results establish performance of the four
complete configurations under the tested reference and command limits.

\subsection{Timing and Scope of the Evidence}
No nonzero solver status is logged in the $28$ comparison trials. The maximum
logged MPC solve time across all configurations is $4.450$\,ms; the Proposed
maximum is $2.789$\,ms. Solve time is distinct from the active-loop duration,
which includes measurement acquisition, GRU prediction, optimizer calls, CAN
transmission, and post-send reads. The loop timer stops before writing the
CSV row and sleeping, so it is not a measurement of the complete scheduled
period or an end-to-end latency bound.

\begin{table}[!htb]
\caption{Logged timing for Proposed across seven trials. An overrun means
the measured active-loop work exceeds $10$\,ms.}
\label{tab:timing}
\centering\small
\begin{tabular}{@{}lr@{}}
\toprule
Quantity & Value\\
\midrule
Maximum MPC solve time & $2.789$\,ms\\
Maximum active-loop duration & $44.012$\,ms\\
Overruns, entire runs & $100/12\,588$ ($0.794\%$)\\
Overruns, $t\geq5$\,s & $70/9\,094$ ($0.770\%$)\\
\bottomrule
\end{tabular}
\end{table}

Table~\ref{tab:timing} shows that the implementation has occasional overruns.
Thus $100$\,Hz describes the nominal schedule, not a hard-real-time guarantee.
The fixed-period predictor also approximates intervals affected by scheduling
jitter. Solver success and small solver times do not remove this distinction.

The experiments support residual correction and command-headroom management
for the measured operating condition. They do not establish joint-horizon
coverage, a full-state tube bound, recursive feasibility, or robustness to
unmeasured disturbance regimes. In particular, safety monitoring and a final
command limiter are practical mechanisms, not substitutes for a proof of
predicted state-constraint satisfaction. A median-plus-fixed-radius comparator
would be needed to attribute improvements specifically to the adaptive radius.
These limitations do not alter the measured full-configuration comparison.

\section{Conclusion}
This paper presented a disturbance-distribution learned, tube-inspired MPC
for two-axis current-command pointing. A shared quantile predictor provides
a median acceleration-residual correction and a calibrated radius for
stage-dependent command reserves. A fixed-policy finite-horizon inclusion
result states the conditions behind analytical tube tightening; the hardware
uses an explicitly described nonlinear predictor and practical input margins.
In seven trials per configuration, the combined scheme attained
$3.94\pm0.10$\,mrad 2D pointing RMSE and no command-limit samples in the
evaluation windows. The comparison supports the complete controller under
the tested conditions, while timing records show occasional active-loop
overruns at a nominal $100$\,Hz schedule. Future work concerns explicit
current-channel uncertainty, joint-horizon calibration, and a verified
connection between the deployed input margins and recursive feasibility.

\section*{CRediT authorship contribution statement}
\textbf{Yaokun Lu:} Conceptualization, Methodology, Software, Investigation,
Writing -- original draft. \textbf{Yifan Li:} Investigation, Validation.
\textbf{Enmei Wang:} Supervision, Methodology, Writing -- review \& editing.
\textbf{Hao Teng:} Supervision, Resources, Writing -- review \& editing.
\textbf{Pengwei Hu:} Resources, Validation. \textbf{Jianzhong Qiao:} Funding
acquisition, Supervision, Project administration.

\section*{Declaration of competing interest}
The authors declare that they have no known competing financial interests or
personal relationships that could have appeared to influence the work reported
in this paper.

\section*{Declaration of generative AI and AI-assisted technologies in the manuscript preparation process}
During the preparation of this work, the authors used OpenAI Codex and
Anthropic Claude for implementation checks, editorial assistance, and
manuscript review, and DeepSeek, Qwen, and Google Gemini for additional
manuscript review and language editing. After using these tools, the authors
reviewed and edited the content as needed and take full responsibility for the
content of the published article.

\section*{Acknowledgments}
This work was supported in part by the National Natural Science Foundation of
China under Grants U24B20158, 62388101, and 62573034.

\section*{Data availability}
The hardware trial data underlying Table~\ref{tab:hw} are provided as
supplementary material.

\bibliographystyle{elsarticle-num}
\bibliography{references}

\end{document}